\documentclass{article}


\usepackage[preprint]{neurips_2024}



\usepackage[utf8]{inputenc} 
\usepackage[T1]{fontenc}    
\usepackage{amsmath}        
\usepackage{amsfonts}       
\usepackage{hyperref}       
\usepackage{url}            
\usepackage{booktabs}       
\usepackage{nicefrac}       
\usepackage{microtype}      
\usepackage{xcolor}         
\usepackage{amssymb}        
\usepackage{algorithm2e}
\usepackage{amsthm}         
\usepackage{graphicx}       

\newtheorem{theorem}{Theorem}

\newtheorem{lemma}{Lemma}
\newtheorem{remark}{Remark}

\newcommand\vn{\vec{\mathbf{n}}}

\newcommand\rc{{}^\mathcal{C}}
\newcommand\rw{{}^\mathcal{W}}

\newcommand\rcn{\rc\vec{\mathbf{n}}}
\newcommand\rwv{\rw\vec{\mathbf{v}}}

\title{Inteval Analysis for two spherical functions arising from robust Perspective-n-Lines problem}

%

\author{%
  Xiang Zheng \\
  School of Data Science\\
  Chinese University of Hong Kong(SZ)\\
  \texttt{224045013@link.cuhk.edu.cn} \\
  \And
  Haodong Jiang \\
  School of Data Science\\
  The Chinese University of Hong Kong(SZ)\\
  \texttt{haodongjiang@link.cuhk.edu.cn} \\
  \And
  Junfeng WU \\
  School of Data Science\\
  The Chinese University of Hong Kong(SZ)\\
  \texttt{junfengwu@cuhk.edu.cn}
}

\begin{document}

\maketitle

\begin{abstract}
  This report presents a comprehensive interval analysis of two spherical functions derived from the robust Perspective-n-Lines (PnL) problem. The study is motivated by the application of a dimension-reduction technique to achieve global solutions for the robust PnL problem. We establish rigorous theoretical results, supported by detailed proofs, and validate our findings through extensive numerical simulations.
\end{abstract}

\section{Preliminary}
\textbf{Notations:} we use the notation $ \mathbf{a} \bullet\mathbf{b}\triangleq \mathbf{a}^\top\mathbf{b} $, and use the notation 
$ (\mathbf{a},\mathbf{b})\triangleq\begin{bmatrix}
  \mathbf{a} \\ \mathbf{b}
\end{bmatrix} $ for the concatenation of two vectors. We use the notation ${}^{\mathcal{F}}\mathbf{a}$ to highlight that $\mathbf{a}$ is observed in the reference frame ${\mathcal{F}}$. Specifically, we denote the normalized camera frame as ${\mathcal{C}}$, and denote the world frame as ${\mathcal{W}}$.  
\subsection{Parameterization of Lines}
Consider a 2D line in the image which writes as follows in the pixel coordinate: 
$$
\begin{bmatrix}
  A&B&C
\end{bmatrix}(u,v,1) = 0.
$$
where the coefficients can be easily determined with two pixels on the line. According to the following linear transformation:
$$
(x,y,1)
= \mathbf{K}^{-1}(u,v,1),
$$
where $\mathbf{K}$ is the camera intrinsic matrix, we can write the same line in the normalized image coordinate as
$$
\begin{bmatrix}
  A_c&B_c&C_c
\end{bmatrix}(x,y,1)=0,
$$
with $
\begin{bmatrix}
  A_c&B_c&C_c
\end{bmatrix}=\begin{bmatrix}
  A&B&C
\end{bmatrix}\mathbf{K} 
$. We use the normalized coefficient vector $\vn$ to parameterize a 2D line $l_c$ in the normalized camera coordinate:
$$
\rc\vn=\frac{(A_c,B_c,C_c)}{\|(A_c,B_c,C_c)\|},~~l_c:=\{\rc\mathbf{p}\in\mathbb{R}^2|\rc\vn\bullet(\rc\mathbf{p},1)=0\}.
$$
We refer to $\rc\vn$ as the normal vector since it is perpendicular to the plane passing through the camera origin and $l_c$. 

As for a 3D line $L_w$ observed in the world coordinate, we parameterize it with a point $\rw\mathbf{p}_0$ on it and a unit-length direction vector $\rw\vec{\mathbf{v}}$, such that
$$
L_w:=\{\rw\mathbf{p}\in\mathbb{R}^3|(\rw\mathbf{p}-\rw\mathbf{p}_0)\times\rw\vec{\mathbf{v}}=0\}.
$$
\subsection{Projection Model}
Assume the relative transformation from the normalized camera frame to the world coordinate writes as follows
$$
{}^{\mathcal{W}}_{\mathcal{C}}\mathbf{T}=\begin{bmatrix}
    \mathbf{R} & \mathbf{t}\\\mathbf{0}^\top & 1
\end{bmatrix}.
$$ Assume a 2D line $l_c$ with normal vector $\rcn$ is the projection of a 3D line $L_w$ parameterized with $\rwv$ and $\rw\mathbf{p}_0$, the following two equations~\cite{liu1990determination} uniquely determine the projection:
\begin{align}
    \rc\vec{\mathbf{n}} \bullet \mathbf{R}^\top\rw\vec{\mathbf{v}} &= 0, \label{eqn::rotation_equation}\\
    (\mathbf{R}\rc\vec{\mathbf{n}})\bullet(\rw\mathbf{p}_0-\mathbf{t})&= 0.\label{eqn::translation_equation}
\end{align}
\subsection{Accelerating Consensus Maximization}
Consider the following 1d CM problem:
\begin{equation}\label{model::1D_CM}
    \max_{b\in\mathcal{X}}\sum_{k=1}^{K}\mathbf{1}\{|f(b|\mathbf{s}_k)|\leq\epsilon\},
\end{equation}
where $b$ is a scalar parameter that belongs to $\mathcal{X}\subseteq\mathbb{R}$, $\mathbf{s}_k$ is data, and $f(b|\mathbf{s}_k)$ is the residual function continuous in $b$. Suppose we can obtain $
|f(b|\mathbf{s}_k)|\leq\epsilon \Leftrightarrow b\in \bigcup_{l}[b_{kl}^l,b_{kl}^r]
$, the authors of ~\cite{zhang2024accelerating} observe that problem~\eqref{model::1D_CM} is equivalent to the following interval stabbing problem:
\begin{equation}\label{model::1d_stab}
\max_{b\in\mathcal{X}}\sum_{k=1}^{K}\sum_l\mathbf{1}\{b\in[b_{kl}^l,b_{kl}^r]\}.
\end{equation}
Next, consider a n-d CM problem: 
\begin{equation}\label{model::nD_CM}
    \max_{\mathbf{b}\in\mathcal{X}\subseteq\mathbb{R}^n}\sum_{k=1}^{K}\mathbf{1}\{|f(\mathbf{b}|\mathbf{s}_k)|\leq\epsilon\}.
\end{equation}
The ACM method distinguishes one parameter with others, $\mathbf{b}=(\mathbf{b}_{1:n-1},b_n)$, and branches only the space of $\mathbf{b}_{1:n-1}$. This is achieved by revising the bound-seeking procedure. Suppose we are seeking bounds for $\mathbf{b}_{1:n-1}\in\boldsymbol{\mathcal{C}}_{1:n-1}$ and $b_n$.
\subsubsection{Lower Bound}
Denote the consensus maximizer as $(\mathbf{b}^*_{1:n-1},b_n^*)$, ACM finds a lower bound as follows, $\sum_{k=1}^{K}\mathbf{1}\{|f(\mathbf{b}_{1:n-1}^*,b_n^*|\mathbf{s}_k)|\leq\epsilon\}$
$$
\geq~\max_{b_n}\sum_{k=1}^{K}\mathbf{1}\{|f(b_n|\mathbf{b}_{1:n-1}^{(c)},\mathbf{s}_k)|\leq\epsilon\},
$$
where $\mathbf{b}_{1:n-1}^{(c)}$ is the center point of $\boldsymbol{\mathcal{C}}_{1:n-1}$. Notice that the lower bound corresponds to a 1-d CM problem, and it can be efficiently solved by interval stabbing as we do in~\eqref{model::1d_stab}. 
\subsubsection{Upper Bound}
If a consensus problem can be written as 
\begin{equation}
    \max_{\mathbf{b}\in\mathcal{X}}\sum_{k=1}^{K}\mathbf{1}\{|\sum_{i} f_i(b_n,h_i(\mathbf{b}_{1:n-1},\mathbf{s}_k)|\mathbf{s}_k)|\leq\epsilon\}\label{model::sum_CM}
\end{equation}
where $h_i(\mathbf{b}_{1:n-1})$ is a function of $\mathbf{b}_{1:n-1}$. And if all $f_i$ are monotonically increasing(\textit{it's similar to decreaseing condition}) in $h_i$, set that 
\begin{equation}
    h_i^{L} \le h_i(\mathbf{b}_{1:n-1},\mathbf{s}_k) \le h_i^{U}  
\end{equation}
we can get 
\begin{equation}
  f_L(b_n|\mathbf{s}_k)=\sum_{i} f_i(b_n|h_i^{L},\mathbf{s}_k) \le \sum_{i} f_i(b_n,h_i(\mathbf{b}_{1:n-1},\mathbf{s}_k)|\mathbf{s}_k)\le \sum_{i} f_i(b_n|h_i^{U},\mathbf{s}_k)=f_U(b_n|\mathbf{s}_k)
\end{equation}

Given these bounding functions, ACM finds an upper bound as follows, $\sum_{k=1}^{K}\mathbf{1}\{|f(\mathbf{b}_{1:n-1}^*,b_n^*|\mathbf{s}_k)|\leq\epsilon\}$
\begin{equation}
  \label{eqn::upper_interval}
  \leq\max_{b_n}\sum_{k=1}^{K}\mathbf{1}\left[\{f_L(b_n|\mathbf{s}_k)\leq\epsilon\}\bigcap\{f_U(b_n|\mathbf{s}_k)\geq-\epsilon\}\right]
\end{equation}

which can be solved by interval stabbing.

Readers can refer to \cite{zhang2024accelerating}'s work for the detailed introduction and other applications of the accelerating consensus maximization algorithm.

\section{Problem Formulation}
\subsection{Basic problem}
Given a set of 3D lines $\{L_{w_i}\}_{i=1}^N$ and their corresponding 2D lines $\{l_{c_i}\}_{i=1}^N$, the CM problem for the PnL problem can be formulated as follows:
\begin{equation}\label{model::plain_rotation_CM}
  \mathop{\rm max}\limits_{\mathbf{R}\in SO(3)} ~~\sum_k \boldsymbol{1}\{|\mathbf{}\rc\vec{\mathbf{n}}_k\bullet\mathbf{R}^\top\rw\vec{\mathbf{v}}_{k}|<=\epsilon_r\},
\end{equation}
where $\vec{\mathbf{v}}_k$ represent the direction of 3D lines, $\vec{\mathbf{n}}_k$ represent the normalized coefficient vector of 2D lines.
\subsection{How to accelerate?}
In the rotation estimation problem~\eqref{model::plain_rotation_CM}, we parameterize rotation with a rotation axis $\vec{\mathbf{u}}\in\mathbb{S}^3$ and an amplitude $\theta\in[0,\pi]$. We choose $\theta$ as the distinguished parameter, and further parameterize $\vec{\mathbf{u}}$ by polar coordinates: 
$$
\vec{\mathbf{u}}=(\sin{\alpha}\cos{\phi}, \sin{\alpha}\sin{\phi},\cos{\alpha})~~ \alpha\in[0,\pi]~\phi\in[0,2\pi].
$$
Denote data from a pair of 2D/3D line matching as $\mathbf{s}_k\triangleq(\vec{\mathbf{v}}_k,\vec{\mathbf{n}}_k)$, we can write the observation function for~\eqref{model::plain_rotation_CM} as
\begin{equation}\label{eqn::residual}
  \mathbf{}\rc\vec{\mathbf{n}}_k\bullet\mathbf{R}^\top\rw\vec{\mathbf{v}}_{k}= \vec{\mathbf{n}}_k^\top  \vec{\mathbf{v}}_k +\sin{\theta}\vec{\mathbf{n}}_k^\top(\vec{\mathbf{u}}\times \vec{\mathbf{v}}_k)+(1-\cos{\theta})\vec{\mathbf{n}}_k^\top[\vec{\mathbf{u}}]_{\times}^2\vec{\mathbf{v}}_k,
\end{equation}
Rearrange the terms in~\eqref{eqn::residual} likes \eqref{model::sum_CM}, we can get
\begin{equation}
  \label{eqn::h1_h2}
  \begin{cases}
    f_1(\theta ,h_1(\vec{\mathbf{u}},\mathbf{s}_k),s_k)=\vec{\mathbf{n}}_k^\top  \vec{\mathbf{v}}_k +\sin{\theta}h_1 &\text{where } h_1(\vec{\mathbf{u}}|\mathbf{s}_k)\triangleq
  \vec{\mathbf{u}}^\top( \vec{\mathbf{v}}_k\times\vec{\mathbf{n}}_k)   \\
  
  f_2(\theta ,h_2(\vec{\mathbf{u}},\mathbf{s}_k),s_k)= (1-\cos \theta )h_2 &\text{ where }h_2(\vec{\mathbf{u}}|\mathbf{s}_k)\triangleq\vec{\mathbf{n}}_k^\top[\vec{\mathbf{u}}]_{\times}^2\vec{\mathbf{v}}_k  
  \end{cases}
\end{equation}
As long as we can find the lower and upper bounds for $h_1$ and $h_2$, we can find the accessible intervals for upper bound of \eqref{model::nD_CM} according to \eqref{eqn::upper_interval}.
\subsection{conclusions}
For clarity, we denote the sub-cube as
$$
\boldsymbol{\mathcal{C}}_{\vec{\mathbf{u}}}\triangleq\{(\alpha,\phi)|\alpha\in[\underline{\alpha},\bar{\alpha}],\phi\in[\underline{\phi},\bar{\phi}]\},
$$
denote the boundary of $\boldsymbol{\mathcal{C}}_{\vec{\mathbf{u}}}$ as $\partial \boldsymbol{\mathcal{C}}_{\vec{\mathbf{u}}}$, and denote
$$
\vec{\mathbf{m}}_k = \frac{\vec{\mathbf{v}}_k+\vec{\mathbf{n}}_k}{\|\vec{\mathbf{v}}_k+\vec{\mathbf{n}}_k\|},~~\vec{\mathbf{m}}_k^\perp=\frac{\vec{\mathbf{v}}_k-\vec{\mathbf{n}}_k}{\|\vec{\mathbf{v}}_k-\vec{\mathbf{n}}_k\|},~~\vec{\mathbf{c}}_k\triangleq\frac{\vec{\mathbf{v}}_k\times\vec{\mathbf{n}}_k}{\|\vec{\mathbf{v}}_k\times\vec{\mathbf{n}}_k\|}.
$$
We summarize our results in the following theroems:
\subsubsection{Theorem and proof of $h_2$}

\begin{theorem}[Extreme Point Theorem for $h_2(\vec{\mathbf{u}}|\mathbf{s}_k)$]\label{Theorem::h2}
  ~\\
  \begin{enumerate}
          \item If $\pm\vec{\mathbf{m}}_k\in\boldsymbol{\mathcal{C}}_{\vec{\mathbf{u}}}$, we have $\arg\max_{\vec{\mathbf{u}}}h_2(\vec{\mathbf{u}}|\mathbf{s}_k)=\pm\vec{\mathbf{m}}_k.$ 
          \item If $\pm\vec{\mathbf{m}}_k^\perp\in\boldsymbol{\mathcal{C}}_{\vec{\mathbf{u}}}$, we have $\arg\min_{\vec{\mathbf{u}}}h_2(\vec{\mathbf{u}}|\mathbf{s}_k)=\pm\vec{\mathbf{m}}_k^\perp$.
          \begin{figure}[htbp]
            \centering
            \includegraphics[width=0.8\textwidth]{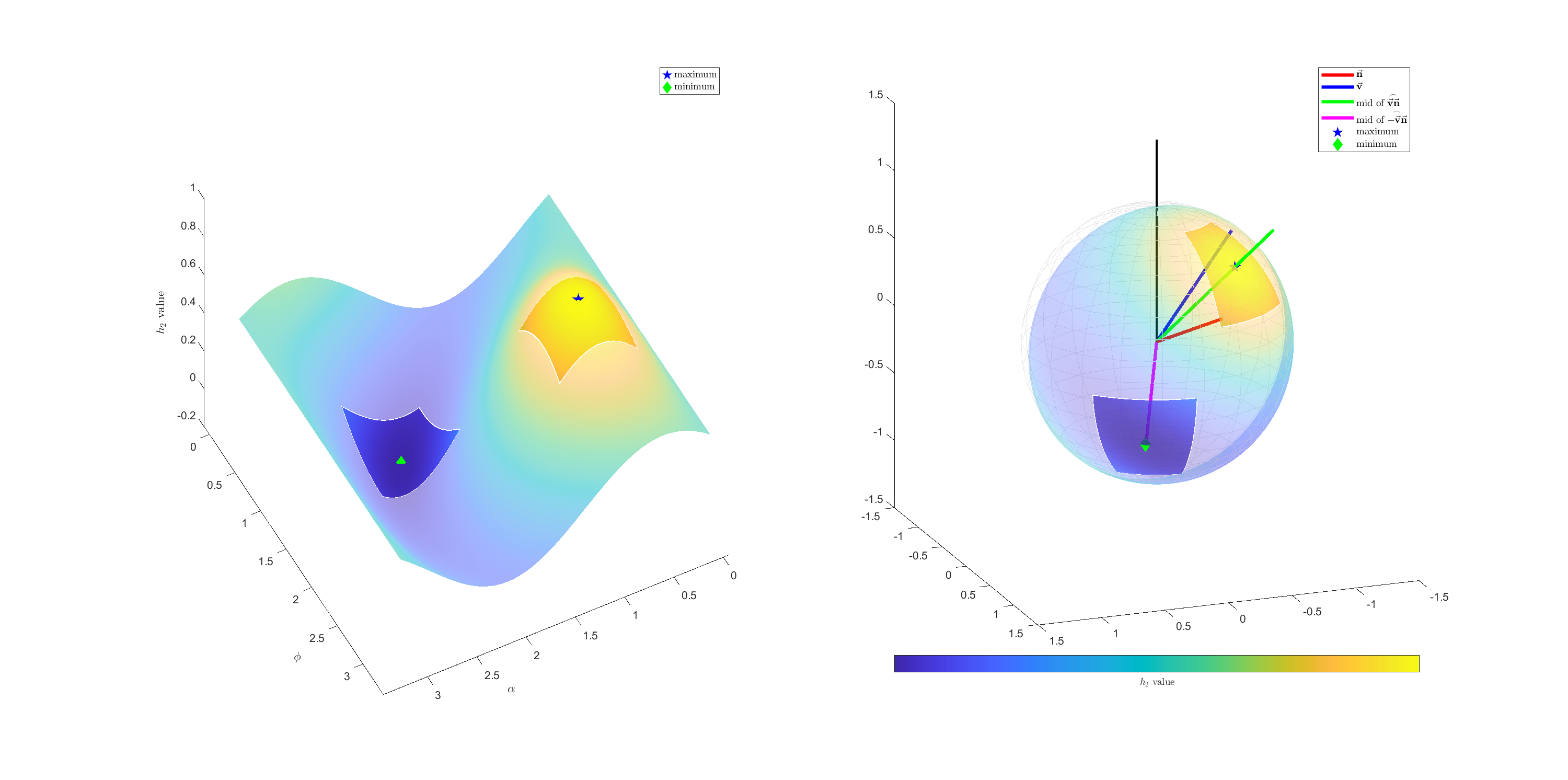}
            \caption{Illustration of the first and second cases in Theorem~\ref{Theorem::h2}.}
          \end{figure} 
          \item Otherwise, the extreme points fall on $\partial \boldsymbol{\mathcal{C}}_{\vec{\mathbf{u}}}$.
          \begin{figure}[htbp]
            \centering
            \includegraphics[width=0.8\textwidth]{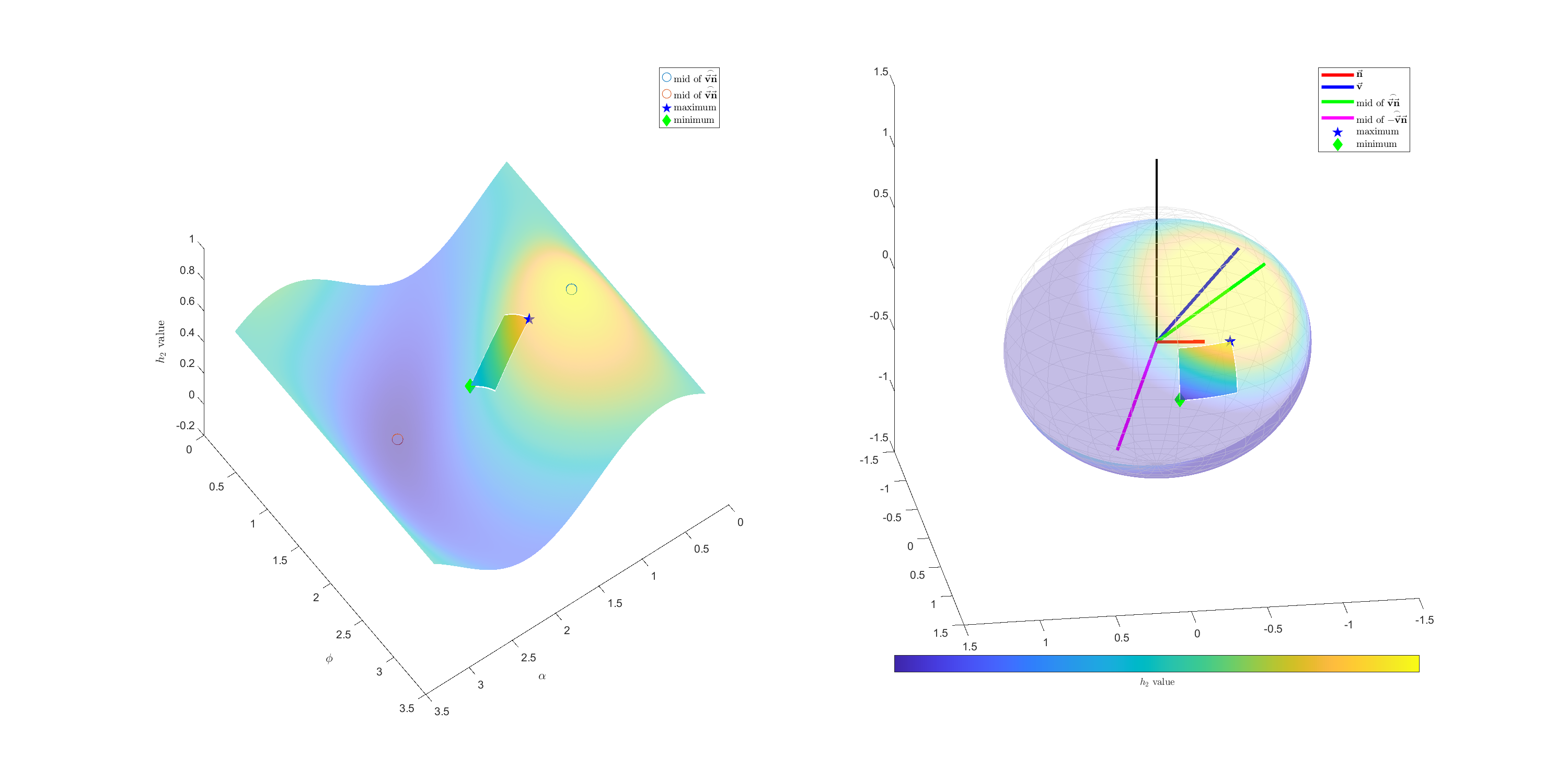}
            \caption{Illustration of the third case in Theorem~\ref{Theorem::h2}.}
          \end{figure}
      \end{enumerate}
  \end{theorem}
\begin{proof}
   TBD
\end{proof}

\subsubsection{Theorem and proof of $h_1$}
\begin{theorem}[Extreme Point Theorem for $h_1(\vec{\mathbf{u}}|\mathbf{s}_k)$]\label{Theorem::h1}
  ~\\
  \begin{enumerate}
          \item If $\vec{\mathbf{c}}_k\in\boldsymbol{\mathcal{C}}_{\vec{\mathbf{u}}}$, we have $\arg\max_{\vec{\mathbf{u}}}h_1(\vec{\mathbf{u}}|\mathbf{s}_k)=\vec{\mathbf{c}}_k.$ 
          \item If $-\vec{\mathbf{c}}_k\in\boldsymbol{\mathcal{C}}_{\vec{\mathbf{u}}}$, we have $\arg\min_{\vec{\mathbf{u}}}h_1(\vec{\mathbf{u}}|\mathbf{s}_k)=-\vec{\mathbf{c}}_k$.
          \begin{figure}[htbp]
            \centering
            \includegraphics[width=0.6\textwidth]{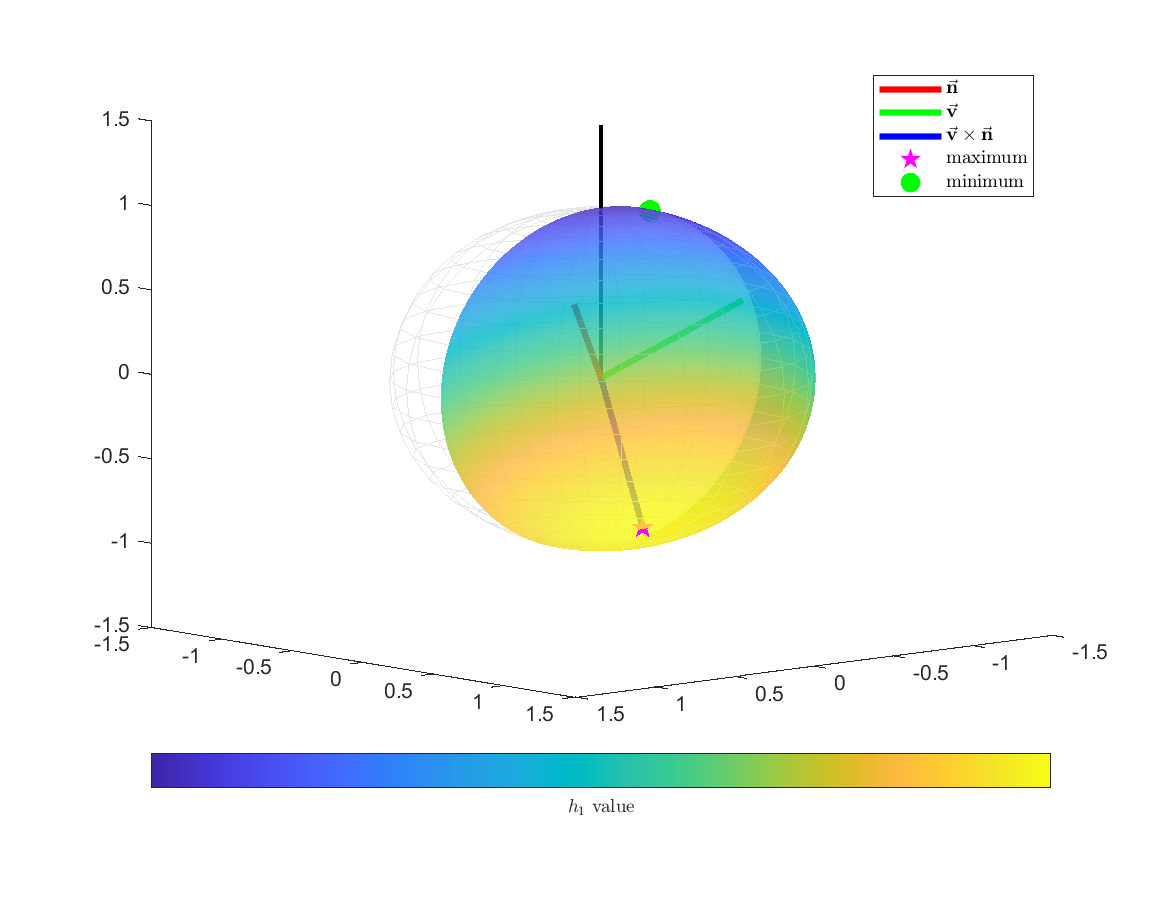}
            \caption{Illustration of the first and second cases in Theorem~\ref{Theorem::h1}.}
          \end{figure}
          \item Otherwise, the extreme points fall on $\partial \boldsymbol{\mathcal{C}}_{\vec{\mathbf{u}}}$.
          \begin{figure}[htbp]
            \centering
            \includegraphics[width=0.6\textwidth]{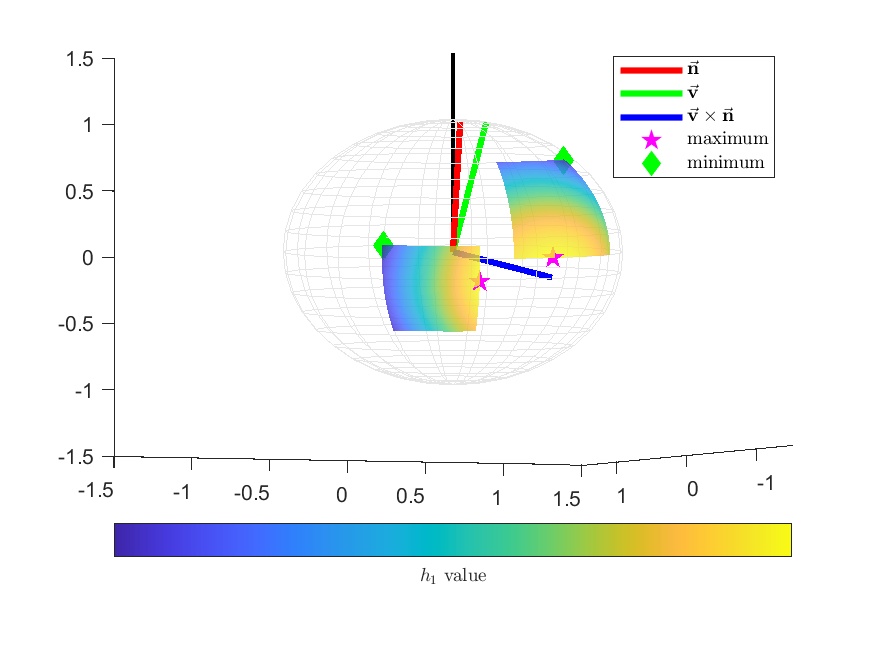}
            \caption{Illustration of the third case in Theorem~\ref{Theorem::h1}.}
          \end{figure}
      \end{enumerate}
  \end{theorem}

The proof for Theorem~\ref{Theorem::h1} is similar to and easier than the proof for Theorem~\ref{Theorem::h2}, thus we omit the proof here and focus on discussing the third case in Theorem~\ref{Theorem::h1}. Denote the polar coordinates for $\vec{\mathbf{c}}_k$ as $(\alpha_k,\phi_k)$, we have:
\begin{equation}\label{accelerated_BnB::rotation::h1_proof}
h_1(\vec{\mathbf{u}}|\mathbf{s}_k)\propto(\sin{\alpha_k}\sin{\alpha}\cos{(\phi_k- \phi)} + \cos{\alpha_k}\cos{\alpha}), 
\end{equation}
where the constant term $\|\vec{\mathbf{v}}_k\times \vec{\mathbf{n}}_k\|$ is omitted. The partial derivative of $h_1$ with respect to $\alpha$ and $\phi$ writes
  \begin{subequations}
  \begin{align}
    \frac{\partial h_1}{\partial \alpha}&\propto\sin{\alpha_k}\cos{\alpha}\cos{(\phi_k - \phi)}-\sin{\alpha}\cos{\alpha_k},\label{eqn::h1_partial_alpha}\\
    \frac{\partial h_1}{\partial \phi}&\propto\sin{\alpha_k}\sin{\alpha}\sin{(\phi_k - \phi)}.
    \label{eqn::h1_partial_phi}
  \end{align}
  \end{subequations}
Without loss of generality, we only consider the case where both the sub-cube $\boldsymbol{\mathcal{C}}_{\vec{\mathbf{u}}}$ and $\vec{\mathbf{c}}_k$ belong to the east-hemisphere, i.e., $\phi_k\in[0,\pi]$ and $\phi\in[0,\pi]$. Denote polar coordinates which minimize $h_1$ as $\alpha_{\rm min}$ and $\phi_{\rm min}$ respectively, and denote the maximizers as $\alpha_{\rm max}$ and $\phi_{\rm max}$ respectively. Use notations $\alpha_{\rm near}$ and $\phi_{\rm near}$ as
$$
\alpha_{\rm near}\triangleq\mathop{\arg\min}_{\alpha\in[\alpha_l,\alpha_r]} |\alpha-\alpha_k|,~~\phi_{\rm near}\triangleq\mathop{\arg\min}_{\phi\in[\phi_l,\phi_r]} |\phi-\phi_k|.
$$
Note that $\alpha_{\rm near}=\alpha_k$ if $\alpha_k\in[\alpha_l,\alpha_r]$. Similarly, we use notations $\alpha_{\rm far}$ and $\phi_{\rm far}$. We first give a lemma for $\phi$:
\begin{lemma}\label{lemma::phi}
    If $(\alpha,\phi)\in\partial\boldsymbol{\mathcal{C}}_{\vec{\mathbf{u}}}$ is a extreme point for $h_1(\vec{\mathbf{u}}|\mathbf{s}_k)$ on the boundaries of cube, one must have
    \label{lemma::h1_phi}
    $$
    \phi_{\rm max}=\phi_{\rm near}, \phi_{\rm min}=\phi_{\rm far}.
    $$
    \begin{proof}
        Based on the partial derivative~\eqref{eqn::h1_partial_phi}, we highlight two observations.~(1) For a fixed $\alpha\in[0,\pi]$, we have $\frac{\partial h_1}{\partial \phi}>0$ if $\phi<\phi_k$, and $\frac{\partial h_1}{\partial \phi}<0$ if $\phi>\phi_k$.~(2) The partial derivative takes the same value for $\phi_1$ and $\phi_2$ equally close to $\phi_k$. Based on the above two observations, we naturally conclude this lemma.
    \end{proof}
\end{lemma}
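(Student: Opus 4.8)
The plan is to collapse the two-dimensional boundary search into a one-dimensional monotonicity argument in $\phi$, exploiting that in \eqref{accelerated_BnB::rotation::h1_proof} the entire $\phi$-dependence of $h_1$ is carried by the single factor $\cos(\phi_k-\phi)$. First I would fix $\alpha\in[\underline{\alpha},\bar{\alpha}]$ arbitrarily and read $h_1(\alpha,\cdot)$ as $\sin\alpha_k\sin\alpha\,\cos(\phi_k-\phi)+\cos\alpha_k\cos\alpha$, i.e.\ an affine-increasing function of $\cos(\phi_k-\phi)$ whose slope $\sin\alpha_k\sin\alpha$ is nonnegative because $\alpha,\alpha_k\in[0,\pi]$. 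Consequently maximizing (resp.\ minimizing) $h_1$ over $\phi$ is the same as maximizing (resp.\ minimizing) $\cos(\phi_k-\phi)$, equivalently minimizing (resp.\ maximizing) the angular distance $|\phi-\phi_k|$.

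Next I would turn this into a unimodality statement on the feasible interval $[\phi_l,\phi_r]$. By observation~(1) following \eqref{eqn::h1_partial_phi}, in the east-hemisphere range $\phi,\phi_k\in[0,\pi]$ we have $\partial h_1/\partial\phi>0$ when $\phi<\phi_k$ and $\partial h_1/\partial\phi<0$ when $\phi>\phi_k$, so $h_1(\alpha,\cdot)$ rises strictly up to $\phi_k$ and falls strictly afterwards. A function of this shape on $[\phi_l,\phi_r]$ attains its maximum at the interval point nearest its peak $\phi_k$, namely $\phi_{\rm near}$, and its minimum at one of the endpoints; the symmetry of $\cos(\phi_k-\phi)$ about $\phi=\phi_k$ (observation~(2)) identifies the minimizing endpoint as the one farther from $\phi_k$, namely $\phi_{\rm far}$. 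Thus for every fixed $\alpha$ one has $\arg\max_\phi h_1(\alpha,\phi)=\phi_{\rm near}$ and $\arg\min_\phi h_1(\alpha,\phi)=\phi_{\rm far}$.

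Third, I would lift the per-$\alpha$ conclusion to the whole cube. The key point is that $\phi_{\rm near}$ and $\phi_{\rm far}$ depend only on $\phi_k$ and $[\phi_l,\phi_r]$ and so are independent of $\alpha$, which lets me interchange the order of optimization:
\begin{equation*}
\max_{(\alpha,\phi)\in\boldsymbol{\mathcal{C}}_{\vec{\mathbf{u}}}}h_1=\max_{\alpha}\,\max_{\phi}\,h_1(\alpha,\phi)=\max_{\alpha}h_1(\alpha,\phi_{\rm near}),
\end{equation*}
and symmetrically $\min_{(\alpha,\phi)}h_1=\min_\alpha h_1(\alpha,\phi_{\rm far})$. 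Since the third case of Theorem~\ref{Theorem::h1} excludes both $\vec{\mathbf{c}}_k$ and $-\vec{\mathbf{c}}_k$ from $\boldsymbol{\mathcal{C}}_{\vec{\mathbf{u}}}$, $h_1$ has no interior critical point and its extrema must sit on $\partial\boldsymbol{\mathcal{C}}_{\vec{\mathbf{u}}}$; hence any boundary maximizer has $\phi$-coordinate $\phi_{\rm max}=\phi_{\rm near}$ and any minimizer has $\phi_{\rm min}=\phi_{\rm far}$, which is the claim.

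The hard part will be the degenerate configurations that make the reduction vacuous. When $\sin\alpha\,\sin\alpha_k=0$---that is, when $\vec{\mathbf{u}}$ or $\vec{\mathbf{c}}_k$ sits at a pole---the slope vanishes and $h_1$ no longer depends on $\phi$, so the identities hold but $\phi_{\rm near},\phi_{\rm far}$ cease to be the unique optimizers and the statement must be read up to this indeterminacy. I also have to keep the east-hemisphere normalization $\phi,\phi_k\in[0,\pi]$ in force throughout, since it is exactly what guarantees $|\phi-\phi_k|\le\pi$ and hence that $\cos(\phi_k-\phi)$ is genuinely monotone in $|\phi-\phi_k|$ over the full range; without it the map from angular distance to $h_1$-value would no longer be monotone and the near/far characterization could fail.
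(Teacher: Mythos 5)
Your proposal is correct and takes essentially the same route as the paper's proof: both rest on the unimodality of $h_1(\alpha,\cdot)$ about $\phi=\phi_k$ (the sign pattern of \eqref{eqn::h1_partial_phi}, equivalently monotonicity in $\cos(\phi_k-\phi)$ over $|\phi-\phi_k|\le\pi$) together with the symmetry about $\phi_k$ to conclude $\phi_{\rm max}=\phi_{\rm near}$ and $\phi_{\rm min}=\phi_{\rm far}$. Your extra steps---interchanging the optimization order because $\phi_{\rm near}$ and $\phi_{\rm far}$ do not depend on $\alpha$, and flagging the degenerate case $\sin\alpha_k\sin\alpha=0$ where uniqueness fails---only make explicit what the paper's two observations leave implicit.
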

After we fix $\phi$ at either $\phi_{\rm near}$ or $\phi_{\rm far}$, we focus on the partial derivative ~\eqref{eqn::h1_partial_alpha}. 
\begin{lemma}\label{lemma::alpha}
For $\alpha_k\not=\pi/2$, the partial derivative ~\eqref{eqn::h1_partial_alpha}.  has a unique zero point $\alpha^*\in[0,\pi]$. For a fixed $\phi$, the zero point $\alpha^*$ is a global maximizer if $|\phi_k-\phi|<\pi/2$, and is a global minimizer if $|\phi_k-\phi|<\pi/2$.
\begin{proof}
The partial derivative~\eqref{eqn::h1_partial_alpha} can be organized in the form of $A\sin{(\alpha+\bar{\beta})}$, with $\bar{\beta}$ as fixed angle. For $\alpha_k\not=\pi/2$, we have $\bar{\beta}\not=0$, and as a result there exist a unique zero point $\alpha^*\in[0,\pi]$. If $|\phi_k-\phi|\not=\pi/2$ and $\alpha\not=\pi/2$, we can rewrite \eqref{eqn::h1_partial_alpha} as:
\begin{equation}\label{eqn::h1_partial_alpha::transform}
    \frac{\partial h_1}{\partial \alpha}= cos{\alpha_k}cos{\alpha}(\tan{\alpha_k}\cos{(\phi_k-\phi)}-\tan{\alpha}).
\end{equation}
We discuss four cases in the table below, and the results are easy to verify using~\eqref{eqn::h1_partial_alpha::transform}. From the table, we can observe that $\cos\alpha_k\cos\alpha^*>0$ for $\Delta\phi<\pi/2$, and $\cos\alpha_k\cos\alpha^*<0$ for $\Delta\phi>\pi/2$. We naturally arrive at the conclusion in this lemma based on this observation.
\begin{table}[htbp]
\centering
\caption{ Four different cases of $\alpha_k$ and $\Delta \phi$}
    \begin{tabular}{|l|l|l|c|}
        \hline
        $\alpha_k$& $\Delta \phi$ & $\alpha^*\in$  & $\cos \alpha_k\cos \alpha^*$ \\
        \hline
        $<\pi/2$&  $<\pi/2$ &  $(0,\alpha_k)$&  $>0$\\ \hline
        $>\pi/2$&  $<\pi/2$ &  $(\alpha_k,\pi)$&  $>0$ \\ \hline
        $<\pi/2$&  $>\pi/2$ &  $(\pi-\alpha_k,\pi)$&  $<0$ \\ \hline
        $>\pi/2$&  $>\pi/2$ &  $(0,\pi-\alpha_k)$&  $<0$\\ \hline
    \end{tabular}
    \label{h1::table}
\end{table}
\end{proof}
\end{lemma}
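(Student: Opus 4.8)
The plan is to treat $h_1$, for a fixed $\phi$, as a function of the single variable $\alpha$ on $[0,\pi]$ and exploit the fact that \eqref{accelerated_BnB::rotation::h1_proof} is a pure sinusoid in $\alpha$. Writing $\Delta\phi\triangleq|\phi_k-\phi|$ and $c\triangleq\cos\Delta\phi$, the partial derivative \eqref{eqn::h1_partial_alpha} is the linear combination $\sin\alpha_k\,c\,\cos\alpha-\cos\alpha_k\sin\alpha$ of $\cos\alpha$ and $\sin\alpha$, which I would collapse into a single harmonic $A\sin(\alpha+\bar\beta)$ with amplitude $A=\sqrt{\cos^2\alpha_k+c^2\sin^2\alpha_k}$. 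The hypothesis $\alpha_k\neq\pi/2$ forces $\cos\alpha_k\neq0$, hence $A\ge|\cos\alpha_k|>0$, so the derivative is a genuine (non-degenerate) sinusoid whose zeros are simple and spaced exactly $\pi$ apart. Since $\alpha$ ranges over an interval of length $\pi$, at most one such zero lies in the interior; evaluating the derivative at the endpoints gives $\sin\alpha_k\,c$ at $\alpha=0$ and $-\sin\alpha_k\,c$ at $\alpha=\pi$, which have opposite signs, so the intermediate value theorem supplies existence. This yields the unique critical point $\alpha^*$ and proves the first claim.

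For the second claim I would avoid a brute-force second-derivative computation by noting the structural identity $\partial^2 h_1/\partial\alpha^2=-h_1$, which is immediate because $h_1$ is sinusoidal in $\alpha$. Thus $\alpha^*$ is a strict maximizer exactly when $h_1(\alpha^*)>0$ and a strict minimizer exactly when $h_1(\alpha^*)<0$. Substituting the critical-point relation $\sin\alpha_k\,c\,\cos\alpha^*=\cos\alpha_k\sin\alpha^*$ into \eqref{accelerated_BnB::rotation::h1_proof} collapses the expression to $h_1(\alpha^*)=\cos\alpha_k/\cos\alpha^*$, whose sign equals that of $\cos\alpha_k\cos\alpha^*$. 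Hence the entire classification reduces to determining the sign of $\cos\alpha_k\cos\alpha^*$.

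The final and most delicate step is to show $\cos\alpha_k\cos\alpha^*>0$ precisely when $\Delta\phi<\pi/2$ and $<0$ when $\Delta\phi>\pi/2$; this also corrects the evident typo in the statement, where the minimizer case should read $|\phi_k-\phi|>\pi/2$. Here I would use the factored form \eqref{eqn::h1_partial_alpha::transform}, namely $\partial h_1/\partial\alpha=\cos\alpha_k\cos\alpha\,(\tan\alpha_k\,c-\tan\alpha)$, and run the four-way case split on the sign of $\cos\alpha_k$ (whether $\alpha_k<\pi/2$ or $\alpha_k>\pi/2$) and the sign of $c$ (whether $\Delta\phi<\pi/2$ or $\Delta\phi>\pi/2$). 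In each case, monotonicity of $\tan$ on $(0,\pi/2)$ and on $(\pi/2,\pi)$ localizes $\alpha^*$ to the sub-interval recorded in Table~\ref{h1::table}, from which the sign of $\cos\alpha^*$, and therefore of $\cos\alpha_k\cos\alpha^*$, is read off directly. I expect the bookkeeping in this case analysis—tracking on which side of $\pi/2$ the root $\alpha^*$ lands and that $\tan$ changes sign there—to be the main obstacle, whereas the degenerate boundary $\Delta\phi=\pi/2$ (where $c=0$ and the two roots migrate to the endpoints $\alpha=0,\pi$) is handled separately, consistently with the exclusion already made when deriving \eqref{eqn::h1_partial_alpha::transform}.
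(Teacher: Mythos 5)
Your proposal is correct and follows essentially the same route as the paper: the same collapse of \eqref{eqn::h1_partial_alpha} into $A\sin(\alpha+\bar{\beta})$ for uniqueness of $\alpha^*$, the same factored form \eqref{eqn::h1_partial_alpha::transform}, and the same four-case localization of $\alpha^*$ recorded in Table~\ref{h1::table} to read off the sign of $\cos\alpha_k\cos\alpha^*$. Your additions---the amplitude bound $A\ge|\cos\alpha_k|>0$, the endpoint-sign IVT existence argument, and the identity $\partial^2 h_1/\partial\alpha^2=-h_1$ together with $h_1(\alpha^*)=\cos\alpha_k/\cos\alpha^*$, which makes explicit the link from the table's sign to the max/min classification that the paper leaves implicit---are refinements rather than a different method, and you are right that the lemma's second ``$|\phi_k-\phi|<\pi/2$'' is a typo for ``$|\phi_k-\phi|>\pi/2$''.
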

\begin{remark}
    Notice that we omit to discuss the special cases where $\alpha_k=\pi/2$ or $|\phi-\phi_k|=\pi/2$ for the sake of simplicity. These special cases are easy to handle, interested readers can refer to our code for details.
\end{remark}
Combining Lemma~\ref{lemma::alpha} and Lemma~\ref{lemma::phi}, we propose a efficient procedure to find extreme points of $h_1(\vec{\mathbf{u}}|\mathbf{s}_k)$ on $\partial\boldsymbol{\mathcal{C}}_{\vec{\mathbf{u}}}$. First of all, we find the maximizer with $\phi=\phi_{\rm near}$. Denote $\Delta\phi_{\rm near}\triangleq |\phi_k-\phi_{\rm near}|$.
\begin{enumerate}
    \item If $\Delta\phi_{\rm near}=0$, we have
    $$
    \alpha_{\rm max}=\alpha_{\rm near}.
    $$
    \item if $\Delta\phi_{\rm near}=\pi/2$
        $$\alpha_{\max}=\begin{cases}
\alpha _1 &\text{if} \quad \alpha \le \pi/2 \\
\alpha _2 &\text{if} \quad \alpha > \pi/2

\end{cases}$$
    \item If $\Delta\phi_{\rm near}>\pi/2$, we have    
    $$
    \alpha_{\rm max}= \mathop{\arg\max}_{\alpha\in[\alpha_l,\alpha_r]}|\alpha-\alpha^*(\Delta\phi_{\rm near})|.
    $$
    \item If $\Delta\phi_{\rm near}<\pi/2$, $\alpha_k<\pi/2$, and $\alpha_l>=\alpha_k$, we have
    $$
    \alpha_{\rm max}=\alpha_l.
    $$
    \item $\Delta\phi_{\rm near}<\pi/2$, $\alpha_k>\pi/2$, and $\alpha_r<=\pi-\alpha_k$, we have
    $$
    \alpha_{\rm max}=\alpha_r.
    $$
    \item Otherwise, calculate $\alpha^*(\Delta\phi_{\rm near})$ and we have
    $$
    \alpha_{\rm max}=\mathop{\arg\min}_{\alpha\in[\alpha_l,\alpha_r]}|\alpha-\alpha^*(\Delta\phi_{\rm near})|.
    $$ 
\end{enumerate}
We can find the minimizer with $\phi=\phi_{\rm far}$ with a quite symmetrical procedure. Denote $\Delta\phi_{\rm far}=|\phi_k-\phi_{\rm far}|$.
\begin{enumerate}
    \item If $\Delta\phi_{\rm far}<\pi/2$, we have    
    $$
    \alpha_{\rm min}= \mathop{\arg\min}_{\alpha\in[\alpha_l,\alpha_r]}|\alpha-\alpha^*(\Delta\phi_{\rm far})|.
    $$
    \item if $\Delta\phi_{\rm near}=\pi/2$
    $$
        \alpha_{\min}=\begin{cases}
        \alpha _2 &\text{if} \quad \alpha \le \pi/2 \\
        \alpha _1 &\text{if} \quad \alpha > \pi/2
        
        \end{cases}
    $$
    \item If $\Delta\phi_{\rm far}>\pi/2$, $\alpha_k<\pi/2$, and $\alpha_r<=\pi-    \alpha_k$, we have
    $$
    \alpha_{\rm min}=\alpha_r.
    $$
    \item $\Delta\phi_{\rm far}>\pi/2$, $\alpha_k>\pi/2$, and $\alpha_l>=\pi-\alpha_k$, we have
    $$
    \alpha_{\rm min}=\alpha_l.
    $$
    \item Otherwise, calculate $\alpha^*(\Delta\phi_{\rm far})$ and we have
    $$
    \alpha_{\rm min}=\mathop{\arg\min}_{\alpha\in[\alpha_l,\alpha_r]}|\alpha-\alpha^*(\Delta\phi_{\rm far})|.
    $$ 
\end{enumerate}

{
\small
\bibliographystyle{abbrvnat}
\bibliography{reference.bib}
}






\end{document}